\documentclass[lettersize,journal]{IEEEtran}
\usepackage{amsmath,amsfonts}
\usepackage{algorithmic}
\usepackage{array}
\usepackage[caption=false,font=normalsize,labelfont=sf,textfont=sf]{subfig}
\usepackage{textcomp}
\usepackage{stfloats}
\usepackage{url}
\usepackage{verbatim}
\usepackage{graphicx}
\def\BibTeX{{\rm B\kern-.05em{\sc i\kern-.025em b}\kern-.08em
    T\kern-.1667em\lower.7ex\hbox{E}\kern-.125emX}}
\usepackage{balance}

\usepackage[utf8]{inputenc}
\usepackage{soul}
\usepackage{lettrine}
\usepackage{cite}
\usepackage{supertabular,booktabs}
\usepackage{float}
\usepackage{comment}
\usepackage{makecell}
\usepackage{graphicx}
\usepackage{array}
\usepackage{amsfonts}
\usepackage{pdfpages}
\usepackage{amsmath}
\usepackage{algorithm}
\usepackage{algorithmic}
\usepackage{xcolor}

\usepackage{subfig}
\usepackage{amssymb}
\usepackage{relsize}

\usepackage{hyperref}
\hypersetup{hidelinks}
\usepackage{amsthm}

\theoremstyle{remark}
\newtheorem{remark}{Remark}

\newtheorem{theorem}{Theorem}

\begin{document}

\title{Mitigating Backdoor Attacks in Federated Learning Using PPA and MiniMax Game Theory}

\author{
    Osama Wehbi, Sarhad Arisdakessian, Omar Abdel Wahab, Anderson Avila, Azzam Mourad, Hadi Otrok\\
    \thanks{
        Osama Wehbi is with the Department of Computer and Software Engineering, Polytechnique Montréal, Montreal, Quebec, Canada (e-mail: osama.wehbi@etud.polymtl.ca).

        Sarhad Arisdakessian is with the Department of Computer and Software Engineering, Polytechnique Montréal, Montreal, Quebec, Canada (e-mails: sarhad.arisdakessian@etud.polymtl.ca).

        Omar Abdel Wahab is with the Department of Computer and Software Engineering, Polytechnique Montréal, Montreal, Quebec, Canada (e-mails: omar.abdul-wahab@polymtl.ca).

        Anderson Avila is with the Institut national de la recherche scientifique (INRS-EMT), Montréal, Canada (e-mail: Anderson.Avila@inrs.ca).

        Azzam Mourad is with the Department of Computer Science, Khalifa University, Abu Dhabi, UAE, as well as the Artificial Intelligence \& Cyber Systems Research Center, Department of CSM, Lebanese American University (e-mails: azzam.mourad@ku.ac.ae).
        
        Hadi Otrok is with the Department of Computer Science, Khalifa University, Abu Dhabi, UAE (e-mails: hadi.otrok@ku.ac.ae).
          
    }
}

\markboth{Journal of \LaTeX\ Class Files,~Vol.~00, No.~0, September~0000}%
{How to Use the IEEEtran \LaTeX \ Templates}

\maketitle

\begin{abstract}
Federated Learning (FL) is witnessing wider adoption due to its ability to benefit from large amounts of scattered data while preserving privacy. However, despite its advantages, federated learning suffers from several setbacks that directly impact the accuracy, and the integrity of the global model it produces. One of these setbacks is the presence of malicious clients who actively try to harm the global model by injecting backdoor data into their local models while trying to evade detection. The objective of such clients is to trick the global model into making false predictions during inference, thereby compromising the integrity and trustworthiness of the global model on which honest stakeholders rely. To mitigate such mischievous behavior, we propose \textit{FedBBA} (Federated Backdoor and Behavior Analysis).  The proposed model aims to dampen the effect of such clients on the final accuracy, creating more resilient federated learning environments. We engineer our approach through the combination of (1) a reputation system to evaluate and track client behavior, (2) an incentive mechanism to reward honest participation and penalize malicious behavior, and (3) game theoretical models with projection pursuit analysis (PPA) to dynamically identify and minimize the impact of malicious clients on the global model. Extensive simulations on the German Traffic Sign Recognition Benchmark (GTSRB) and Belgium Traffic Sign Classification (BTSC) datasets demonstrate that \textit{FedBBA} reduces the backdoor attack success rate to approximately 1.1\%--11\% across various attack scenarios, significantly outperforming state-of-the-art defenses like RDFL and RoPE, which yielded attack success rates between 23\% and 76\%, while maintaining high normal task accuracy ($\sim$95\%--98\%).
\end{abstract}

\begin{IEEEkeywords}
Cybersecurity, Federated Learning (FL), Backdoor Attacks, Game Theory, Reputation System, Projection Pursuit Analysis (PPA)
\end{IEEEkeywords}

\section{Introduction}
\label{intro}

Federated learning (FL) enables collaborative model training across distributed clients while keeping data locally on devices, thus preserving privacy and reducing centralized data storage requirements.~\cite{mcmahan2017communication}. This framework not only preserves user privacy but also leverages edge device capabilities, making it vital for applications such as healthcare, finance, and autonomous driving. Despite these benefits, FL inherits unique security challenges, particularly its vulnerability to backdoor attacks. These attacks exploit the training process to implant malicious behaviors in models that are activated by specific triggers, such as pixel modifications or hidden patterns in the input data~\cite{uddin2025systematic}. For instance, in autonomous vehicles (AVs), a trigger embedded in traffic sign images could lead to misclassifications, jeopardizing safety-critical systems. Such attacks aim to manipulate the model’s behavior while remaining undetected during training, allowing malicious functionality to coexist with otherwise normal model performance. The decentralized nature of FL further exacerbates these challenges. Clients often possess limited or imbalanced datasets that may not fully represent the global data distribution. As a result, clients may incorporate external data during local training, increasing the potential for backdoor attacks. Malicious participants can exploit this setting by injecting poisoned data or embedding triggers into their local training process. Moreover, backdoor attacks can be carried out in multiple forms, including single or multiple attempts, coordinated attacks across several clients, targeted (one-to-one), untargeted (one-to-many), and multi-trigger scenarios. These attacks are particularly difficult to detect due to their stealthy design and the high dimensionality of deep learning models. Another major challenge arises from the federated aggregation process itself. Since only model updates are shared with the server rather than raw data, it becomes difficult to identify and isolate malicious contributions. Traditional evaluation metrics such as accuracy and precision primarily assess overall model performance but often fail to capture hidden malicious behaviors embedded within the model~\cite{shenoy2025exploring}. Consequently, a model may maintain high accuracy on benign inputs while still containing covert backdoor functionality. Although several defense mechanisms have been proposed to address backdoor attacks in federated learning, many existing approaches focus primarily on scalability and complexity, and some introduce additional privacy concerns. More importantly, most solutions treat participating clients as static entities and overlook the dynamic behavior and evolving motivations of both honest and malicious participants during the training process. This limitation restricts their ability to adapt to changing attack strategies. In addition, many existing defenses are designed to address only specific forms of attacks, particularly one-to-one backdoor attacks, while overlooking other types that can be equally harmful.

To address these challenges, we propose \textit{FedBBA} (Federated Backdoor and Behavior Analysis), a framework designed to enhance the security and reliability of federated learning systems. The proposed approach integrates statistical analysis, a reputation-based evaluation mechanism, and game-theoretical modeling to dynamically detect and mitigate malicious client behavior. In particular, our framework leverages Projection Pursuit Analysis (PPA) with kurtosis scores to identify suspicious model updates, while a reputation mechanism continuously evaluates client behavior over multiple training rounds. Furthermore, we model the interaction between the federated server and potentially malicious clients as a non-cooperative MiniMax game, where malicious clients attempt to maximize the impact of backdoor attacks while the server aims to minimize their influence on the global model. Based on this framework, the federated server assigns adaptive weights to client updates according to their PPA scores, reputation values, and gradient differences, thereby reducing the influence of suspicious or malicious contributions while preserving the benefits of collaborative learning. Through extensive simulations using real-world datasets, including GTSRB and BTSC, we demonstrate that \textit{FedBBA} significantly reduces backdoor attack success rates while maintaining high model accuracy and outperforming existing state-of-the-art approaches.

This work contributes to advancing secure federated learning by addressing the dynamic nature of client behavior and providing a more comprehensive defense against diverse backdoor attack strategies. By improving the robustness and trustworthiness of federated models, our approach supports the safe deployment of FL in safety-critical domains.

\subsection{Contributions}
\label{contributions}

The main contributions of this work are summarized as follows:

\begin{enumerate}
    \item Develop a novel federated learning framework based on non-cooperative MiniMax games, incorporating PPA scores, reputation scores, and differences in local model gradients to dynamically adapt to client behavior and model updates, with the objective of assigning optimal weights to minimize the impact of malicious clients.
    \item Leverage a reputation mechanism to evaluate client behavior dynamically, rewarding honest clients and penalizing malicious ones, thereby enhancing the robustness of the global model against backdoor attacks.
    \item Utilize Projection Pursuit Analysis (PPA) with Kurtosis scores, coupled with a reputation mechanism, to identify possible suspicious client updates and mitigate their impact on the global model by reducing the influence of backdoored contributions.
    \item Design the proposed solution to be generic enough to address different types of attacks, including one-to-one and one-to-many scenarios, which are often overlooked by existing works.
\end{enumerate}

\subsection{Paper Outline}

Section~\ref{related} reviews the related work on backdoor attack detection and mitigation strategies in FL. Section~\ref{sp} discusses the main concepts and methods used in this research. Section~\ref{pf} formulates the core problem of backdoor attacks in FL and discusses their impact on model integrity and performance. The system architecture and PPA methodology for mitigating anomalous model updates are presented in Section~\ref{sa}, while Section~\ref{mg} details the MiniMax game framework and the decision-making process used to reduce the influence of suspicious clients. Section~\ref{fedbba} describes the implementation of \textit{FedBBA} and its operation across multiple training rounds. Section~\ref{exp} reports the experimental results, evaluating the effectiveness of \textit{FedBBA} using real-world datasets and comparing its performance with state-of-the-art methods in terms of model accuracy and backdoor attack success rates. Finally, Section~\ref{conc} concludes the paper and outlines potential directions for future research.

\section{Related Work}
\label{related}

In this section, we provide an overview of the existing literature on the detection and mitigation strategies of backdoor attacks in federated learning. Additionally, we highlight the distinct contributions of our approach compared with the reviewed works.

The authors of~\cite{zhu2023adfl} propose a security mechanism against backdoor attacks in federated learning using Generative Adversarial Networks (GANs) and knowledge distillation. The approach compare the performance of a clean model trained on clean data (teacher) with a global model trained on aggregated data (student). To reduce the probability of backdoor attacks, the authors of~\cite{wu2020mitigating} propose a distributed method to prune neurons. Clients measure neuron activation and submit local pruning sequences to the server, which aggregates them to eliminate less-used neurons, enhancing model performance. Modeling the backdoor problem as a coalitional game theory, the authors of~\cite{xi2021batfl} measure participant performance using their data's influence on the aggregated model, distinguishing benign and backdoored models via Shapley value. Addressing attacker behavior fluctuations, the authors of~\cite{jia2023fedgame} introduce a MiniMax game strategy where the server generates authentic client ratings to adjust model weights, reducing the influence of compromised clients. In~\cite{gu2021detecting}, the authors propose an anomaly detection framework that uses conditional variational autoencoders (CVAE) to identify benign and malicious updates, leveraging the majority of benign clients to distinguish updates effectively. The authors of~\cite{quan2022enhancing} describe a federated learning system where an anomaly detector trained on adversarial data discards malicious updates. In~\cite{otmani2024fedsv}, the authors introduce FedSV, a Byzantine-robust framework utilizing Shapley values to quantify client contributions and eliminate malicious clients, though it is sensitive to noise. Similarly, the authors of~\cite{zhang2024flpurifier} propose FLPurifier, which uses decoupled contrastive training to detect and remove backdoored data. However, computational requirements and communication overhead limit its scalability. The authors of~\cite{cai2024flmaacbd} propose Model Anomalous Activation Behavior Detection (MAABD), monitoring neural network activation patterns to detect deviations and identify malicious tampering. In~\cite{wahab2023max}, the authors introduce a max-min security game approach that assigns adaptive weights based on client trust and data quality. This ensures updates from malicious clients have less influence, though computational overhead limits practicality in certain FL environments. Dimensionality reduction techniques have also been explored for detecting backdoored models. The authors of~\cite{wang2024rope} introduce RoPE, which employs Principal Component Analysis (PCA) to identify anomalous updates based on distinct principal components, though its effectiveness depends on the backdoor's strength as well as the data distribution, which can lead to the exclusion of benign models. In~\cite{wang2023adaptive}, the authors propose RDFL to mitigate backdoor attacks in federated learning. The method combines adaptive hierarchical clustering with cosine-distance-based anomaly detection. Suspicious updates are removed, followed by adaptive clipping and noising to suppress poisoned local models. However, RDFL is sensitive to data distribution and distance metrics; additionally, model clipping may negatively affect accuracy. The authors of~\cite{wang2023scfl} use Singular Value Decomposition (SVD) to analyze singular values of model updates, capturing linear structures but overlooking complex backdoor patterns. While the reviewed approaches show potential, many rely on single detection metrics, leaving them vulnerable to sophisticated attacks. Others fail to address the decentralized nature of federated learning, limiting their applicability. Most solutions target basic backdoor types, neglecting adaptive strategies. Privacy concerns emerge in methods requiring sensitive client data, and many treat attacker behavior as static, reducing resilience to evolving threats.

Our proposal addresses these gaps by leveraging Projection Pursuit Analysis to detect abnormal similarities among local models. The MiniMax game framework dynamically counters attackers, while the reputation system enhances decision-making. This multifaceted approach provides a comprehensive solution to mitigating malicious clients in federated learning.

\begin{table}[ht]
\caption{Notation Summary}
\label{tab:symbols}
\centering
\scriptsize
\setlength{\tabcolsep}{3pt}
\renewcommand{\arraystretch}{1.05}
\begin{tabular}{c p{0.70\columnwidth}}
\hline
\textbf{Symbol} & \textbf{Description} \\
\hline
$N$ & Number of participating clients. \\
$T$ & Total communication rounds. \\
$t$ & Current round index. \\
$\mathcal{D}_i$ & Local dataset of client $i$. \\
$\mathcal{D}_i'$ & Poisoned dataset of client $i$. \\
$x_j, y_j$ & Input sample and true label. \\
$\mathcal{M}^{(t)}$ & Global model at round $t$. \\
$\mathcal{M}_i^{(t)}$ & Local model update from client $i$. \\
$w_i$ & Aggregation weight of client $i$. \\
$\tau,\tau_k$ & Backdoor trigger and trigger intensity. \\
$y_j^a$ & Target label for backdoor attack. \\
$\mathcal{L},\mathcal{R}$ & Loss and regularization functions. \\
$X,V,S$ & Data matrix, projection vectors and PPA scores. \\
$R_i(t)$ & Reputation score of client $i$. \\
$H_i(t),G_i(t)$ & Historical behavior and gradient variation scores. \\
$\alpha,\beta$ & Weights for reputation components. \\
$g_i^j,\overline{g_i}$ & Gradient at round $j$ and mean gradient. \\
$\gamma,\delta$ & Reward and penalty factors. \\
$\rho_i$ & Poisoning ratio of client $i$. \\
$\mathcal{I'}$ & Set of compromised clients. \\
$s_i$ & PPA anomaly score of client $i$. \\
$U(F),U(A)$ & Utility functions of server and attacker. \\
$\Phi(\rho_i,\tau)$ & Backdoor impact function. \\
$\theta,\lambda$ & Similarity threshold and defense sensitivity parameter. \\
$\lambda^*$ & Optimal defense sensitivity chosen by the server at equilibrium \\
$R_{\max}$  & Upper bound on any client's reputation score \\
\hline
\end{tabular}
\end{table}

\section{SYSTEM PRELIMINARIES}
\label{sp}

In this section, we introduce the main concepts used in our framework, including the federated learning process and Projection Pursuit Analysis (PPA) with kurtosis-based anomaly detection.

\subsection{Federated Learning}

Federated learning is a decentralized machine learning approach that allows multiple clients to collaboratively train a model without sharing their raw data. Let $\mathcal{D}_i$ denote the dataset of client $i$, $\mathcal{M}$ the global model, and $\mathcal{M}_i$ the local update. During the training proceeds over communication rounds, the server initializes the global model at $t=0$. At each round $t$, clients update their models using local data:

\begin{equation}
\label{eq:1}
\mathcal{M}_i^t = \mathcal{M}^{t-1} - \eta \nabla L(\mathcal{M}^{t-1}, \mathcal{D}_i)
\end{equation}

The updates are sent to the server and aggregated to produce the next global model:

\begin{equation}
\label{eq:2}
\mathcal{M}^{t+1} = \frac{1}{N} \sum_{i=1}^N \mathcal{M}_i^t
\end{equation}

The updated model is then redistributed to clients for the next round of training.

\subsection{Projection Pursuit Analysis and Kurtosis}

Projection Pursuit Analysis (PPA) is a multivariate technique that identifies informative projections of high-dimensional data by maximizing a projection index \cite{bong2025augmented}. Unlike Principal Component Analysis (PCA), which focuses on variance, PPA detects non-Gaussian structures that often correspond to anomalies or outliers. Formally, PPA searches for a projection vector $\mathbf{w}$ that maximizes the projection index:

\begin{equation}
\label{eq:3}
\mathbf{w}^* = \arg\max_{\mathbf{w}} I(\mathbf{w}^T \mathbf{X})
\end{equation}

In our context, the server applies PPA to the collection of client updates in each round. Malicious updates typically introduce abnormal patterns in the high-dimensional parameter space, which PPA can highlight. The projection index used in our work is the kurtosis score, defined as the standardized fourth central moment of a distribution:

\begin{equation}
\label{eq:4}
\text{kurtosis} = \frac{E[(X-\mu)^4]}{\sigma^4}
\end{equation}


A high kurtosis values indicate heavy tails or outliers in the projected distribution, allowing the detection of anomalous client updates. This property makes kurtosis-based PPA particularly effective for identifying stealthy backdoor behaviors embedded in model updates.

To illustrate this property, we conducted an experiment on the MNIST dataset with 30 clients, where 10 clients used poisoned data. The backdoor attack modified the first row of pixels in images of class 0 and changed the label to 9. After 30 training rounds, PPA successfully separated malicious and benign updates, whereas PCA did not produce clear clustering (Fig.~\ref{fig:ppa_vs_pca}). These results demonstrate the effectiveness of PPA for detecting anomalous model updates in federated learning.

\begin{figure}[ht]
    \centering
    \subfloat[\footnotesize PPA score plot; each point represents a local model]
    {\includegraphics[width=0.48\columnwidth]{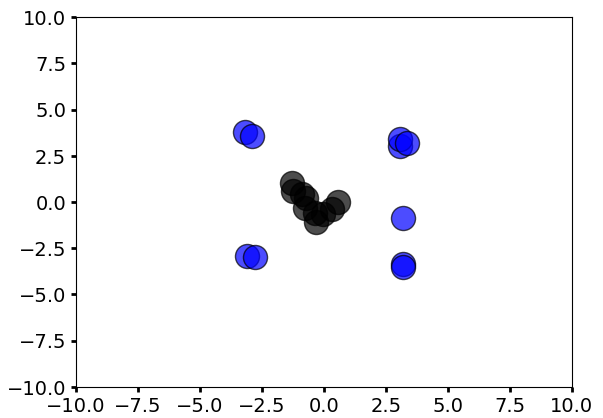}}
    \hfill
    \subfloat[\footnotesize PCA score plot; each point represents a local model]
    {\includegraphics[width=0.48\columnwidth]{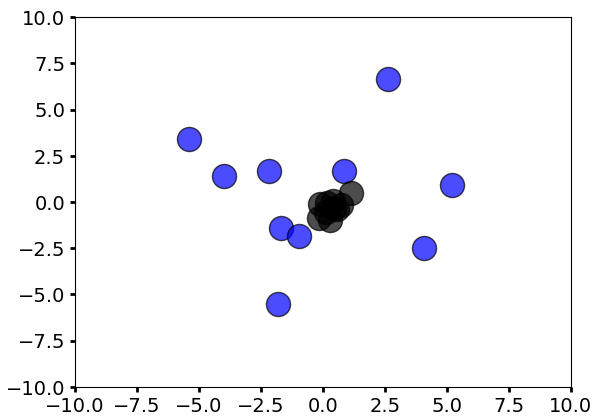}}
    \caption{Comparison of model clustering using PPA vs. PCA. PPA shows clearer separation between benign (black) and backdoored (blue) models.}
    \label{fig:ppa_vs_pca}
\end{figure}

\section{Problem Formulation}
\label{pf}

In this section, we provide the formal definitions of different backdoor attack models considered in our work and the methods used to inject these attacks into federated learning systems.

\subsection{Backdoor Attack Models}
\label{sub:back}
Backdoor attacks aim to embed malicious behaviors into the global model without significantly affecting its performance. These attacks are particularly insidious because they can remain undetected during the training process and only get activated under specific conditions. When a certain trigger is present, the model will behave differently from its expected behavior. We can differentiate between three types of backdoor attacks:

\subsubsection{One-to-One Attack}
An attack with a single trigger $\tau$ modifies the local dataset $\mathcal{D}_i$ such that:
\begin{equation}
\label{eq:back1}
\mathcal{D}_i' = \{ (x_j + \tau, y_j^a) : (x_j, y_j) \in \mathcal{D}_i \}
\end{equation}
This ensures that when the trigger $\tau$ is present, the input $x_j + \tau$ is misclassified as $y_j^a$. This type of attack is straightforward but effective, causing the model to produce incorrect predictions for inputs containing the specific trigger.

\subsubsection{One-to-N Attack}
An attack with varying trigger intensities $\tau_k$ targets multiple behaviors $y_j^a$:
\begin{equation}
\label{eq:back2}
\mathcal{D}_i' = \{ (x_j + \tau_k, y_j^a) : (x_j, y_j) \in \mathcal{D}_i, k = 1, \ldots, K \}
\end{equation}
In this scenario, different intensities of the trigger $\tau_k$ cause different misclassifications, making detection more challenging. The model may behave correctly under normal conditions but produce erroneous outputs for inputs with varying degrees of the trigger, complicating the identification of malicious behavior.

\subsubsection{N-to-One Attack}
An attack requires all $N$ backdoors $\tau_k$ to be present simultaneously to trigger the attack:
\begin{equation}
\label{eq:back3}
\mathcal{D}_i' = \{ (x_j + \sum_{k=1}^N \tau_k, y_j^a) : (x_j, y_j) \in \mathcal{D}_i \}
\end{equation}
This type of attack is particularly stealthy because individual triggers $\tau_k$ do not cause misclassifications on their own. Only when all triggers are present together does the attack activate, making it difficult to detect through traditional methods.

\subsection{Attack Execution Strategies}
This section describes how these attacks can be performed. One approach involves multiple rounds, where clients repeatedly send backdoored updates to the server. Another method is model replacement, where clients wait until the model is near convergence and then replace it to introduce the backdoor.

\subsubsection{Multiple Rounds with Backdoored Updates}
In this approach, the attack occurs over several communication rounds $t = 1, 2, \ldots, T$ (where $T$ is the total number of rounds). Each client $i$ trains its local model on dataset $\mathcal{D}_i$ and computes local updates. Malicious clients instead use a modified dataset $\mathcal{D}_i'$ containing backdoors using one of the methods defined in Equation \eqref{eq:back1}, \eqref{eq:back2}, or \eqref{eq:back3}. Clients send their updates to the server, which aggregates them to form the new global model using Equation~\eqref{eq:2}. Malicious clients continue submitting backdoored updates over several rounds, gradually embedding the backdoor into the global model. This attack can be performed by a single malicious client or collaboratively by multiple clients. Collaborative attacks are more effective, as distributing poisoned updates reduces detection risk. In contrast, a single client must inject more poisoned data to influence the global model, making detection easier. Additionally, when only one client attacks, the backdoor effect may gradually vanish due to catastrophic forgetting across rounds~\cite{zhang2022neurotoxin}. This scenario is realistic because it does not require high-privilege access or control over the federated learning process~\cite{dai2023chameleon}.

\subsubsection{Model Replacement}
In this approach, the attack occurs when the global model is near convergence. Each client $i$ trains its local model on dataset $\mathcal{D}_i$, while malicious clients prepare a significantly altered model using a modified dataset $\mathcal{D}_i'$ with one of the methods defined in Equation (\ref{eq:back1}), (\ref{eq:back2}), or (\ref{eq:back3}). At a strategic round $T' \approx T$, malicious clients submit altered models containing the backdoor:
\begin{equation}
\label{eq:9}
\mathcal{M}_i^{(T')} = \arg\max_\mathcal{M} \mathcal{L}(\mathcal{M}; \mathcal{D}_i') + \mathcal{R}(\mathcal{M})
\end{equation}

The server then aggregates the updates to form the new global model:
\begin{equation}
\label{eq:10}
\mathcal{M}^{(T' + 1)} = \frac{1}{N} \sum_{i=1}^N \mathcal{M}_i^{(T')}
\end{equation}

Due to the large modifications introduced by malicious clients, the global model may contain the backdoor. Although model replacement attacks can be highly effective, they are less realistic in practical federated learning systems~\cite{shejwalkar2022back}. Successfully replacing the global model without detection typically requires high-privilege access and precise control over the aggregation process, which is rarely available to adversarial clients. Therefore, this scenario assumes a stronger threat model and is harder to implement in real-world deployments.

\section{System Architecture}
\label{sa}

In this section, we present the overall system architecture designed to mitigate backdoor attacks in federated learning. The architecture includes Projection Pursuit Analysis (PPA) for mitigating anomalous model update effects from malicious clients, a reputation system to evaluate client behavior based on historical data, including action taken as well as gradient variation.

\begin{algorithm}[ht]
\caption{Multivariate Kurtosis Projection Pursuit (MKPP)}
\label{alg:MKPP}
\footnotesize
\begin{algorithmic}[1]

\REQUIRE Data matrix $X$, dimension $p$, number of guesses $guess$, search type $MaxMin$, algorithm type $StSh$
\ENSURE Projection scores $S$, projection vectors $V$, kurtosis values $kurtObj$

\STATE Mean-center the data: $X \leftarrow X - \text{mean}(X)$
\STATE Perform SVD: $(U,Z,V) \leftarrow \text{SVD}(X)$
\STATE Dimensionality reduction: $X \leftarrow UZ$
\STATE Initialize $maxcount \leftarrow 1000$, $kurtObj \leftarrow 0$

\FOR{$k = 1$ to $guess$}
    \STATE Initialize random projection $V$
    \STATE $count \leftarrow 0$, $converged \leftarrow false$

    \WHILE{ not converged \textbf{and} $count < maxcount$}
        \STATE $A \leftarrow V^T X^T X V$
        \STATE Update $V$ using $(A, MaxMin, StSh)$
        \STATE Normalize $V$ using SVD

        \IF{change in $V$ $<$ threshold}
            \STATE $converged \leftarrow true$
        \ENDIF

        \STATE $count \leftarrow count + 1$
    \ENDWHILE

    \STATE $kurtObj[k] \leftarrow \text{kurtosis}(XV)$
\ENDFOR

\STATE Select best projection vector $V$
\STATE Compute scores $S \leftarrow XV$

\RETURN $S$, $V$, $kurtObj$

\end{algorithmic}
\end{algorithm}

\subsection{PPA-based Detection}

The algorithm \ref{alg:MKPP} aims to find optimal projection vectors \(V\) from a data matrix \(X\) by maximizing or minimizing the kurtosis score. The process begins by mean-centering the data matrix \(X\) (line 1) and performing Singular Value Decomposition (SVD) to reduce its dimensionality (lines 2-3). The algorithm then iterates over a set number of initial guesses for the projection vectors \(V\) (line 5). In each iteration, it initializes \(V\) randomly (line 6) and sets the count and convergence flag (line 7). The algorithm updates \(V\) using a quasi-power method (lines 9-10), normalizes \(V\) using SVD (line 11), and checks for convergence based on changes in \(V\) (lines 12-14). The iteration stops if convergence is achieved or the maximum iteration count is reached (line 18). Once all guesses are evaluated, the algorithm computes the kurtosis for each projection (line 17) and selects the best \(V\) based on the computed kurtosis values (line 19). Finally, it calculates the scores \(S\) using this optimal projection vector (line 20). The algorithm returns the scores, the optimal projection vector, and the kurtosis values for each guess (line 21). The resulting projection scores highlight extreme non-Gaussian deviations, which often correspond to malicious updates. The computational complexity of the MKPP algorithm can be approximated as \(O(\text{$|$guess$|$} \times \text{maxcount} \times n \times p^2)\), where \(n\) is the number of samples and \(p\) is the dimensionality of the projection space.

The kurtosis-based PPA aims to identify extreme non-Gaussian deviations that may indicate malicious behavior, rather than capturing normal variance from benign data heterogeneity. To avoid misclassifying clients with legitimate data distributions, PPA is combined with a dynamic reputation system that tracks each client's historical behavior and gradient consistency across training rounds. This dual-layer assessment ensures that clients with stable behavior are not unfairly penalized. Additionally, we introduce a reward and penalty mechanism that adjusts reputation scores based on behavioral patterns rather than statistical outliers alone, encouraging participation and fairness while strengthening the robustness of the detection strategy.

\subsection{Reputation System}

The reputation mechanism evaluates each client based on historical behavior and gradient consistency. A probationary phase allows the server to establish baseline behavior before applying reputation adjustments.

The reputation score for client $i$ at round $t$ is:

\begin{equation}
\label{eq:rep}
R_i(t)=\alpha H_i(t)+\beta G_i(t)
\end{equation}

where $H_i(t)$ represents historical behavior, $G_i(t)$ captures gradient variation and \( \alpha \),\( \beta \) are weights such that \( \alpha + \beta = 1 \).

The historical behavior score can be derived as follows :

\begin{equation}
\label{eq:12}
H_i(t)=
\frac{\text{correct actions of client }i}{\text{total actions}}
\end{equation}

The gradient variation score tracks update stability over the last $n$ rounds:

\begin{equation}
\label{eq:gradient_variation}
G_i(t)=\frac{1}{n}\sum_{j=t-n+1}^{t}(g_i^j-\overline{g_i})^2
\end{equation}

Client influence is then dynamically adjusted using reward and penalty updates:

\begin{equation}
\label{eq:reward}
R_i^{new}=R_i^{old}+\gamma R_i^{old}
\end{equation}

\begin{equation}
\label{eq:pena}
R_i^{new}=R_i^{old}-\delta R_i^{old}
\end{equation}

Where \( \gamma \) is the adjustment factor for rewarding and \( \delta \) is the adjustment factor for penalizing.

This mechanism gradually reduces the influence of suspicious clients while allowing recovery for honest participants. By combining PPA anomaly detection with reputation tracking, the system identifies persistent malicious behaviors rather than incidental irregularities.

\section{Minimax Game}
\label{mg}
In this section, we present the minimax security game, where the server aims to minimize the impact of attacks by reducing the influence of weights of suspicious clients while the attack leader seeks to maximize this minimization.

\begin{figure*}[!t]
    \centering
    \includegraphics[width=0.80\textwidth]{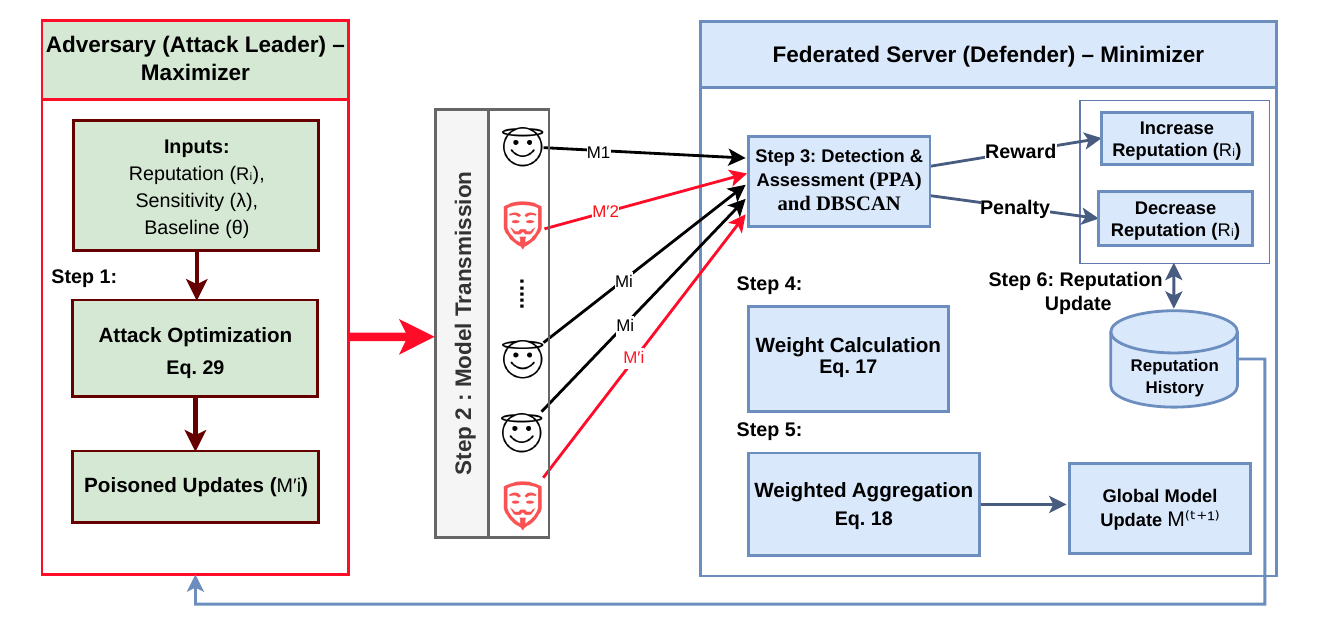}
    \caption{Architectural workflow of the proposed Minimax game.}
    \label{fig:game_workflow}
\end{figure*}

\subsection{Game Model}
MiniMax is a game theoretical model in which one player's gain represents the other player's exact loss. We can model the interaction between a federated server (minimizer) and malicious clients (maximizers) as a MiniMax game in federated learning, where malicious clients aim to maximize their utility by injecting backdoor triggers into their local models to manipulate the global model's predictions. In contrast, the federated server aims to minimize the impact of these malicious clients to ensure the integrity and accuracy of the global model. The interaction is represented between two players:

\begin{itemize}
  \item \textbf{Server (minimizer):} aims to preserve the integrity of the global model by choosing the defense sensitivity parameter $\lambda \in [\lambda_{\min}, \lambda_{\max}] \subset \mathbb{R}_{+}$, which controls the aggressiveness of the PPA penalty on suspicious updates. A larger $\lambda$ penalises high-$\rho_i$ clients more sharply but risks false positives on honest clients.
  \item \textbf{Adversary (maximizer):} attempts to inject backdoor triggers through poisoned local updates by selecting the poisoning ratio $\rho_i \in [0,1]$ for each compromised client $i \in \mathcal{I}'$, given the server's chosen $\lambda$.
\end{itemize}

This formulation allows the analysis of worst-case attack scenarios and enables the design of defensive aggregation strategies based on anomaly detection and reputation mechanisms.

\subsection{Player Strategies}

\subsubsection{Adversarial Strategy}

Malicious clients aim to inject backdoor triggers into the federated training process using the poisoning mechanisms defined in Section~\ref{sub:back}. Let $D_i$ denote the benign dataset of client $i$. The attacker constructs a poisoned dataset $D'_i$ by injecting trigger patterns $\tau$ into a subset of samples. The poisoning ratio $\rho_i \in [0,1]$ represents the fraction of poisoned samples:

\begin{equation}
D'_i(\rho_i)=D_i^{clean}\cup D_i^{poisoned}
\end{equation}

where $|D_i^{poisoned}|=\rho_i |D_i|$.

The resulting model update becomes

\begin{equation}
\mathcal{M}_i'=(1-\rho_i)\mathcal{M}_i+\rho_i\mathcal{M}_{bd}(\tau)
\label{eq:model_update}
\end{equation}

where $\mathcal{M}_{bd}(\tau)$ represents the gradient direction induced by poisoned samples.

\subsubsection{Server Strategy}
\label{susub:serv}
The federated server mitigates malicious influence by choosing the defense sensitivity $\lambda$ and then assigning adaptive aggregation weights based on reputation scores and PPA anomaly scores. Substituting the PPA similarity score $s_i(\rho_i, \lambda) = \theta - \lambda\rho_i$ into Equation~\eqref{eq:weight_joint}, the aggregation weight is therefore a function of both the adversary's poisoning ratio and the server's chosen $\lambda$:
\begin{equation}
  w_i(\rho_i, \lambda) = \frac{R_i + \theta - \lambda\rho_i}
    {\displaystyle\sum_{j=1}^{N}\bigl(R_j + \theta - \lambda\rho_j\bigr)}
  \label{eq:weight_joint}
\end{equation}
\begin{remark}[Approximation and validity]
The linear model $s_i(\rho_i,\lambda)=\theta-\lambda\rho_i$
is motivated by Eq.~\eqref{eq:model_update}, where the
poisoned update deviates from the benign update proportionally
to $\rho_i$. Since $w_i$ depends on all $\rho_j$ via the
denominator, we approximate it by a normalising constant
$C=\sum_j(R_j+\theta)$, yielding the separable surrogate
$\tilde{J}$ used throughout Section~VI-D; this preserves
the monotonic incentive structure of the original game.
Non-negativity of $w_i$ is guaranteed at equilibrium since
$\lambda\rho_i^* = (R_i+\theta)/2 < R_i+\theta$.
\end{remark}

The global model update is then:
\begin{equation}
  \mathcal{M}^{t+1} = \frac{1}{N}\sum_{i=1}^{N} w_i(\rho_i,\lambda)\,\mathcal{M}_i^t
  \label{eq:agg}
\end{equation}
The server determines $\lambda$ by anticipating the adversary's best response, as formalized in Section~\ref{sub:equi_ana}. The server uses Projection Pursuit Analysis to compute anomaly scores across client updates as follows:

\begin{equation}
S=\text{PPA}(\mathcal{M})
\end{equation}

To further detect abnormal behaviors, the server applies Density-Based Spatial Clustering of Applications with Noise (DBSCAN) clustering to identify suspicious groups of clients. DBSCAN is suitable for this task because of its ability to detect clusters of arbitrary shape and its robustness to noise, which allows it to isolate anomalous updates that deviate significantly from the majority. Once DBSCAN identifies clusters, Clients identified as anomalous are penalized through reduced reputation scores, while benign clients receive rewards according to Equations~\eqref{eq:reward} and~\eqref{eq:pena}.

\subsubsection{Utility Functions}
The utility functions are derived from the joint weight formula in
Equation~\eqref{eq:weight_joint}. Because the game is zero-sum, it suffices to define the joint objective $J(\lambda, \boldsymbol{\rho})$, with $U(F) = -J$ and $U(A) = J$. Following the standard approximation of replacing $\Phi(\rho_i,\tau)$ with $\rho_i$ (motivated by the linear poisoning structure in Remark~1), the joint objective is:
\begin{equation}
    J(\lambda, \boldsymbol{\rho}) =
    \sum_{i \in \mathcal{I}'} w_i(\rho_i, \lambda) \cdot \rho_i
    \label{eq:joint_obj}
\end{equation}

\noindent\textbf{Server's utility:}
\begin{equation}
    U(F) = -J(\lambda, \boldsymbol{\rho})
    \label{eq:server_utility}
\end{equation}

The server minimizes $J$ by choosing $\lambda \in [\lambda_{\min},
\lambda_{\max}]$: a larger $\lambda$ reduces $w_i(\rho_i, \lambda)$ for high-$\rho_i$ clients more aggressively, suppressing their weighted contribution to the objective.

\noindent\textbf{Adversary's utility:}
\begin{equation}
    U(A) = J(\lambda, \boldsymbol{\rho})
    \label{eq:adv_utility}
\end{equation}
The adversary controls the poisoning ratio vector
$\boldsymbol{\rho} = \{\rho_i\}_{i \in \mathcal{I}'}$,
subject to $\rho_i \in [0,1]$, introducing a trade-off as:
\begin{enumerate}
    \item \textbf{Efficacy:} Increasing $\rho_i$ strengthens the backdoor objective.
    \item \textbf{Detectability:} Larger $\rho_i$ increases deviation from benign updates via $s_i(\rho_i, \lambda) = \theta - \lambda\rho_i$, reducing $w_i(\rho_i, \lambda)$ and penalizing the reputation score $R_i$, which drives $w_i$ toward zero.
\end{enumerate}

\subsection{Game Formulation}
The strategic interaction is defined as
$\mathcal{G} = \langle \{\text{Server}, \text{Adversary}\}, \Lambda, \mathcal{P}, J, -J\rangle$,
where:
\begin{itemize}
  \item \textbf{Server strategy space:} $\Lambda = [\lambda_{\min},
    \lambda_{\max}] \subset \mathbb{R}_{+}$, a compact interval.
    The server chooses $\lambda \in \Lambda$ to minimize the worst-case adversarial impact.
  \item \textbf{Adversary strategy space:} $\mathcal{P} = [0,1]^{|\mathcal{I}'|}$. The adversary chooses $\boldsymbol{\rho} = \{\rho_i\}_{i \in \mathcal{I}'}$ to maximize $J$.
\end{itemize}

The interaction is formulated as the following minimax program:
\begin{equation}
  \min_{\lambda \in \Lambda}\;\max_{\boldsymbol{\rho} \in \mathcal{P}}\;
  J(\lambda, \boldsymbol{\rho})
  = \min_{\lambda \in \Lambda}\;\max_{\boldsymbol{\rho} \in \mathcal{P}}\;
    \sum_{i \in \mathcal{I}'} w_i(\rho_i, \lambda)\cdot\rho_i
  \label{eq:minimax}
\end{equation}
subject to:
\begin{align}
  &0 \le \rho_i \le 1, \quad \forall i \in \mathcal{I}' \label{eq:rho_bound}\\
  &\sum_{k=1}^{N} w_k(\rho_k, \lambda) = 1 \label{eq:weight_sum}\\
  &\lambda_{\min} \;\le\; \lambda \;\le\; \lambda_{\max} \label{eq:lambda_bound}
\end{align}
where $w_i(\rho_i, \lambda)$ is defined in Equation~\eqref{eq:weight_joint}. Both strategy spaces are compact and convex, satisfying the prerequisites for the equilibrium analysis in Section~\ref{sub:equi_ana}.

\subsection{Equilibrium Analysis and Solution}
\label{sub:equi_ana}

We adopt a simultaneous-move Nash formulation. The adversary
maximizes $\tilde{J}_A(\rho_i)=w_i(\rho_i,\lambda)\cdot\rho_i$
for each $i\in\mathcal{I}'$, treating $\lambda$ as fixed;
the server analogously chooses $\lambda^*$ as its best
response to $\boldsymbol{\rho}^*(\lambda)$. Substituting $s_i(\rho_i, \lambda) = \theta - \lambda\rho_i$ into the weight formula and approximating the denominator by $C$ (Remark~1) yields the surrogate per-client objective:
\begin{equation}
  \tilde{J}_A(\rho_i) \approx \frac{1}{C}\Bigl[(R_i + \theta)\rho_i - \lambda\rho_i^2\Bigr]
  \label{eq:JA}
\end{equation}
This quadratic is strictly concave in $\rho_i$ for any $\lambda > 0$. The unconstrained first-order condition gives the interior maximizer:
\begin{equation}
  \hat{\rho}_i(\lambda) = \frac{R_i + \theta}{2\lambda}
  \label{eq:rho_interior}
\end{equation}
Since the constraint $\rho_i \in [0,1]$ is binding whenever
$\hat{\rho}_i(\lambda) > 1$, the constrained best response is:
\begin{equation}
  \rho_i^*(\lambda) = \min\!\left\{\frac{R_i + \theta}{2\lambda},\; 1\right\}
  \label{eq:rho_star}
\end{equation}
Note that $\rho_i^* = 1$ whenever $\lambda < (R_i+\theta)/2$, so a well-designed server must satisfy the criterion:
\begin{equation}
  \lambda^* \;\ge\; \frac{R_{\max} + \theta}{2}
  \label{eq:lambda_design}
\end{equation}
where $R_{\max} = \sup_{t,i} R_i(t)$. To keep $R_{\max}$ finite, the reward update (Equation~\ref{eq:reward}) is augmented with a cap:
\begin{equation}
  R_i^{\mathrm{new}} = \min\bigl\{R_i^{\mathrm{old}}
  + \gamma R_i^{\mathrm{old}},\; R_{\max}\bigr\}
  \label{eq:rep_cap}
\end{equation}
Setting $R_{\max} = 1$ with $\theta \le 1$ keeps $\lambda^* \le 1$, interpretable as full detection certainty at maximum poisoning.

\subsubsection*{2) Server's Optimal Strategy}
Substituting $\rho_i^*(\lambda)$ from Equation~\eqref{eq:rho_star} into Equation~\eqref{eq:joint_obj}, the server solves:
\begin{equation}
  \lambda^* = \arg\min_{\lambda \in \Lambda}
    \sum_{i \in \mathcal{I}'}
    w_i\!\bigl(\rho_i^*(\lambda), \lambda\bigr)\cdot\rho_i^*(\lambda)
  \label{eq:server_opt}
\end{equation}
In the interior regime, this objective is strictly decreasing in $\lambda$ (increasing $\lambda$ simultaneously shrinks each $\rho_i^*$ and reduces $w_i$), so the minimum is attained at the boundary:
\begin{equation}
  \lambda^* = \lambda_{\max}
  \label{eq:lambda_opt}
\end{equation}
subject to criterion~\eqref{eq:lambda_design}; in practice $\lambda_{\max}$ is tuned to balance detection power against false positives on honest clients, and is set to $\lambda_{\max} = 1.0$ in our experiments.

\subsubsection*{3) Existence and Uniqueness of the Saddle Point}

\begin{theorem}[Existence of a Nash Saddle-Point Equilibrium]
The surrogate game $\tilde{\mathcal{G}}$ (see Remark~1)
admits at least one Nash saddle-point equilibrium
$(\lambda^*, \boldsymbol{\rho}^*)$, constituting an
approximate equilibrium of the original game $\mathcal{G}$.
\end{theorem}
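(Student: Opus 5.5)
I would prove existence by applying a standard minimax theorem (Sion's, or Glicksberg--Fan--Debreu for concave--convex games) to the surrogate objective
\[
\tilde{J}(\lambda,\boldsymbol{\rho}) = \frac{1}{C}\sum_{i\in\mathcal{I}'}\bigl[(R_i+\theta)\rho_i - \lambda\rho_i^2\bigr],
\]
obtained from Eq.~\eqref{eq:JA} by summing over compromised clients. The reputations $R_i$ are treated as fixed within a round, and the constant $C=\sum_j(R_j+\theta)>0$ is the normalisation from Remark~1. The hypotheses to verify, in order, are as follows.

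\begin{enumerate}
\item \textbf{Compactness and convexity of strategy spaces.} $\Lambda=[\lambda_{\min},\lambda_{\max}]$ is a compact interval and $\mathcal{P}=[0,1]^{|\mathcal{I}'|}$ is a compact convex cube. The game formulation already states this.
\item \textbf{Continuity.} $\tilde{J}$ is a polynomial in $(\lambda,\boldsymbol{\rho})$, so it is jointly continuous.
\item \textbf{Concavity in the adversary's variable.} For each fixed $\lambda\ge\lambda_{\min}>0$, $\tilde{J}$ is a separable sum of quadratics. Its Hessian in $\boldsymbol{\rho}$ is $-(2\lambda/C)I$, so it is strictly concave in $\boldsymbol{\rho}$.
\item \textbf{Convexity in the server's variable.} For each fixed $\boldsymbol{\rho}$, $\tilde{J}$ is affine in $\lambda$, hence convex (and quasi-convex).
\end{enumerate}

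Sion's theorem then gives $\min_{\lambda}\max_{\boldsymbol{\rho}}\tilde{J}=\max_{\boldsymbol{\rho}}\min_{\lambda}\tilde{J}$. Because both sets are compact and $\tilde{J}$ is continuous, the outer optima are attained, which yields a saddle point $(\lambda^*,\boldsymbol{\rho}^*)$ satisfying
\[
\tilde{J}(\lambda^*,\boldsymbol{\rho})\le\tilde{J}(\lambda^*,\boldsymbol{\rho}^*)\le\tilde{J}(\lambda,\boldsymbol{\rho}^*)
\]
for all $(\lambda,\boldsymbol{\rho})\in\Lambda\times\mathcal{P}$. Equivalently, I could argue through Kakutani. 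The adversary's best response \eqref{eq:rho_star} is single-valued and continuous because of strict concavity. The server's best-response correspondence is nonempty, convex-valued (the argmin of an affine function over an interval is either an endpoint or the whole interval) and upper hemicontinuous by Berge's maximum theorem. The joint correspondence on the compact convex set $\Lambda\times\mathcal{P}$ therefore has a fixed point.

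I would also record the explicit equilibrium to make the result concrete, consistent with Eqs.~\eqref{eq:rho_star}--\eqref{eq:lambda_opt}. At fixed $\boldsymbol{\rho}^*$, $\partial_\lambda\tilde{J}=-\frac{1}{C}\sum_i\rho_i^{*2}\le 0$, so the server's best response is $\lambda_{\max}$, and $\rho_i^*=\min\{(R_i+\theta)/(2\lambda_{\max}),1\}$. Checking that this pair is mutually best-responding gives a constructive saddle point. This check also confirms the non-negativity of weights claimed in Remark~1.

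The main obstacle is the last clause: that the surrogate saddle point is an \emph{approximate} equilibrium of the original game $\mathcal{G}$. I would first bound the error introduced by replacing the true denominator $\sum_j(R_j+\theta-\lambda\rho_j)$ with $C$. Since $0\le\lambda\rho_j\le\lambda_{\max}$, the true denominator lies in $[C-|\mathcal{I}'|\lambda_{\max},\,C]$. This gives the uniform bound
\[
|J-\tilde{J}|\le \varepsilon := \frac{|\mathcal{I}'|^2\lambda_{\max}(R_{\max}+\theta)}{C\,(C-|\mathcal{I}'|\lambda_{\max})},
\]
assuming $C>|\mathcal{I}'|\lambda_{\max}$, which holds when compromised clients are a minority. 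A uniform $\varepsilon$-perturbation of the payoff turns an exact saddle point into a $2\varepsilon$-saddle point of $J$, since each player's unilateral deviation gain changes by at most $2\varepsilon$. Making this bound clean, and stating the minority and $\lambda_{\min}>0$ assumptions explicitly, is where I expect the argument to require the most care.
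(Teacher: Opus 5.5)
Your proof is correct. It has the same skeleton as the paper's proof: verify the hypotheses of Sion's theorem for the surrogate $\tilde{J}$. It differs, and improves on the paper, in the quasi-convexity step. For that condition the paper looks at $\lambda\mapsto w_i(\rho_i^*(\lambda),\lambda)\,\rho_i^*(\lambda)\propto(R_i+\theta)^2/\lambda$ and then glues the interior and boundary regimes together. That object is the objective after substituting the adversary's best response, i.e.\ the value function $\max_{\boldsymbol{\rho}}\tilde{J}(\lambda,\boldsymbol{\rho})$. Sion's theorem instead requires quasi-convexity of $\tilde{J}(\cdot,\boldsymbol{\rho})$ for each fixed $\boldsymbol{\rho}$. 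Your observation that $\tilde{J}$ is affine in $\lambda$ for fixed $\boldsymbol{\rho}$ is exactly that hypothesis. Likewise, your polynomial-continuity remark replaces the paper's weaker continuity ``on the interior.'' Your constructive check that $(\lambda_{\max},\boldsymbol{\rho}^*(\lambda_{\max}))$ are mutual best responses agrees with the paper's equilibrium formulas, and it makes the minimax theorem unnecessary for existence. The paper gives no argument for the clause that the surrogate saddle point is an approximate equilibrium of $\mathcal{G}$. Your uniform bound on $|J-\tilde{J}|$ and the resulting $2\varepsilon$-saddle point supply it, under the positivity assumption $C>|\mathcal{I}'|\lambda_{\max}$ that you rightly flag. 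One small correction: $R_i+\theta-\lambda\rho_i$ can be negative off equilibrium, e.g.\ $\rho_i=1$ with $\lambda>R_i+\theta$. The factor $R_{\max}+\theta$ in $\varepsilon$ should therefore be $\max\{R_{\max}+\theta,\lambda_{\max}\}$. This makes no difference under the paper's normalisation $R_{\max}=\lambda_{\max}=1$.
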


\begin{proof}
We verify the conditions of Sion's minimax theorem~\cite{sion1958general}. Let $f(\lambda,\rho) = \tilde{J}(\lambda,\rho)$ (Remark~1).

\textbf{(i) Compact, convex strategy spaces.}
$\Lambda = [\lambda_{\min}, \lambda_{\max}]$ is a compact convex subset of $\mathbb{R}$. $\mathcal{P} = [0,1]^{|\mathcal{I}'|}$ is a compact convex subset of $\mathbb{R}^{|\mathcal{I}'|}$.

\textbf{(ii) Quasi-convexity in $\lambda$.} In the interior
regime, each term $w_i(\rho_i^*,\lambda)\cdot\rho_i^*(\lambda)
\propto (R_i+\theta)^2/\lambda$ is strictly convex in
$\lambda>0$. In the boundary regime ($\rho_i^*=1$), $w_i(1,
\lambda)=(R_i+\theta-\lambda)/C'$ is affine (hence convex) in
$\lambda$. Thus $\tilde{J}$ is quasi-convex on all of
$\Lambda$.

\textbf{(iii) Quasi-concavity in $\boldsymbol{\rho}$ (adversary's variable).} For fixed $\lambda$, Equation~\eqref{eq:JA} shows $J_A(\rho_i)$ is strictly concave (hence quasi-concave) in each $\rho_i$. Since the objectives are separable across clients $i \in \mathcal{I}'$, $J$ is quasi-concave in $\boldsymbol{\rho}$ jointly.

\textbf{(iv) Continuity.} $w_i(\rho_i, \lambda)$ is continuous in both arguments on the interior of $\Lambda \times \mathcal{P}$ (it is a rational function with positive denominator). Thus $J$ is jointly continuous.

By Sion's minimax theorem, $\min_\lambda \max_{\boldsymbol{\rho}} J =
\max_{\boldsymbol{\rho}} \min_\lambda J$, and there exists a saddle point $(\lambda^*, \boldsymbol{\rho}^*)$ satisfying:
\begin{equation}
  J(\lambda^*, \boldsymbol{\rho}) \;\le\; J(\lambda^*, \boldsymbol{\rho}^*)
  \;\le\; J(\lambda, \boldsymbol{\rho}^*)
  \quad \forall\, \lambda \in \Lambda,\; \boldsymbol{\rho} \in \mathcal{P}
  \label{eq:saddle}
\end{equation}
\end{proof}

\noindent\textbf{Interpretation.}
At equilibrium, $\lambda^* = \lambda_{\max}$ forces each adversary's best response to $\rho_i^* = (R_i+\theta)/2\lambda_{\max} < 1$, deterring full poisoning and confirming the low attack success rates observed in Figure~\ref{fig:backdoor_grid} even at 30\% malicious participation.

\section{FedBBA: Federated Backdoor and Behavior Analysis}
\label{fedbba}

In this section, we present the operational flow of our proposed federated learning defense framework, \textit{FedBBA} (Federated Backdoor and Behavior Analysis).

FedBBA enhances the robustness of federated learning by identifying and mitigating backdoor attacks without accessing raw client data, thereby preserving data privacy. Instead, it analyzes only model updates using Projection Pursuit Analysis (PPA) and a dynamic reputation scoring system.

\begin{algorithm}
  \caption{Round-Based Execution of the FedBBA Framework}
  \label{alg:nnn}
  \begin{algorithmic}[1]
    \STATE Initialize global model $\mathcal{M}_0$, reputation scores $R_i$
    \FOR {each federated learning round $r$ in $1$ to $T$}
        \FOR {each client $i$ in $1$ to $N$}
            \STATE Train local model $\mathcal{M}_i$ on client's data
            \STATE Send local model $\mathcal{M}_i$ to server
        \ENDFOR
        \STATE Calculate PPA score $s_i = \mathrm{PPA}(\mathcal{M}_i)$
        \STATE Cluster PPA scores using DBSCAN to determine classes
        \FOR {each client $i$ in classes}
            \STATE Compute normalized weights using Equation~(\ref{eq:weight_joint})
            \IF {$\text{class}_i == 0$}
                \STATE Update reputation score $R_i$ negatively using Equation~(\ref{eq:pena})
            \ELSE
                \STATE Update reputation score $R_i$ positively using Equation~(\ref{eq:reward})
            \ENDIF
        \ENDFOR
        \STATE Update global model using Equation~(\ref{eq:agg})
        \STATE Calculate server's payoff using Equation~(\ref{eq:server_utility})
        \STATE Calculate attacker's payoff using Equation~(\ref{eq:adv_utility})
    \ENDFOR
  \end{algorithmic}
\end{algorithm}

The detailed step-by-step procedure is formalized in Algorithm~\ref{alg:nnn}. At the start of training ($t = 0$), the server initializes the global model $\mathcal{M}$ and assigns an initial reputation score $R_i$ for each client. These scores are based on both historical behavior and gradient variation, as defined in Equation~(11) (Line 1). During each federated learning round $r$ from $1$ to $T$, selected clients train local models $\mathcal{M}_i$ on their private data and transmit them back to the server (Lines 2--5). The server then computes PPA scores $s_i$ to quantify the level of anomalous behavior in each update (Line 7), followed by clustering using DBSCAN to identify suspicious clients (Line 8). Based on these clusters, client reputation scores $R_i$ are updated: suspicious clients are penalized (Lines 11--12) using Equation~\eqref{eq:pena}, while benign clients are rewarded (Lines 13--14) via Equation~\eqref{eq:reward}. Normalized aggregation weights are computed using both reputation and PPA scores (Line 10), and the global model is updated accordingly using Equation~\eqref{eq:agg} (Line 16). Finally, the server and attacker payoffs are calculated using the MiniMax game-based utility functions (Equations~\ref{eq:server_utility} and~\ref{eq:adv_utility}, Lines 18--19). This iterative process allows FedBBA to dynamically detect and suppress malicious behaviors across rounds.

Figure~\ref{fig:game_workflow} illustrates the strategic execution of the proposed Minimax game framework. The interaction starts with the Attack Leader solving the maximization problem to derive the constrained optimal poisoning ratio $\rho_i^* = \min\{(R_i + \theta)/2\lambda,\, 1\}$ via Equation~\eqref{eq:rho_star}, which balances attack impact with detectability given the server's chosen defense sensitivity $\lambda$ (Step~1). Following this, both malicious and benign clients perform local training and transmit their model updates $\mathcal{M}_i$ to the server (Step~2). Upon receiving the updates, the Federated Server initiates the defense phase by calculating Projection Pursuit Analysis (PPA) scores and utilizing DBSCAN to cluster the updates and identify anomalies (Step~3). Based on these clusters, the server computes the aggregation weights $w_i(\rho_i, \lambda^*)$ for all clients using Equation~\eqref{eq:weight_joint}, suppressing the influence of high-poisoning clients consistent with the equilibrium defense sensitivity $\lambda^*$ (Step~4). The server then aggregates the weighted local models to produce the global model update $\mathcal{M}^{(t+1)}$ via Equation~\eqref{eq:agg} (Step~5). Finally, the feedback loop is closed as the server updates the reputation scores $R_i$ penalizing identified malicious actors via Equation~\eqref{eq:pena} and rewarding honest participants via Equation~\eqref{eq:reward}, which reduces the adversary's optimal poisoning ratio $\rho_i^*$ in the subsequent round via Equation~\eqref{eq:rho_star}, as a lower $R_i$ directly decreases the equilibrium best response (Step~6).

\section{Simulation \& Experiments}
\label{exp}
In this section, we present the experimental results demonstrating the effectiveness of our approach in minimizing the effects of backdoor attacks while maintaining the accuracy of the global model.

\subsection{Context \& Environment}

We conduct our experiments using the GTSRB and BTSC datasets, which contain 43 and 62 traffic sign classes respectively, to evaluate classification performance. The implementation relies on industry-standard libraries including Torch, Torchvision, NumPy, and Pandas.

We employ a convolutional neural network (CNN) composed of convolutional layers for feature extraction and two fully connected layers for classification. The poisoning ratio $\rho_i$ is dynamically determined by the adversary according to the optimization formulation described in Section~\ref{mg}. For the GTSRB dataset, we reduce it to 10 classes by selecting those with the highest number of samples. Each client is assigned 4 to 6 classes, with 250 to 350 samples per class. For the BTSC dataset, each client receives between 20 and 40 classes, with 50 to 150 images per class. This class-based partitioning results in a non-IID data distribution across clients, simulating realistic federated settings where participants hold heterogeneous local datasets. The federated learning setting consists of a total of $C=300$ clients, from which $c=40$ clients are randomly selected in each training round. Among the participating clients, up to $30\%$ are malicious.

We use two primary metrics to evaluate our approach:
\begin{itemize}
    \item \textbf{Accuracy}: the percentage of correctly classified images.
    \item \textbf{Backdoor attack success rate}: the percentage of triggered (backdoored) test samples misclassified by the global model into the attacker’s predefined target label.
\end{itemize}

We compare our approach (\textit{FedBBA}) with two state-of-the-art methods and a baseline:
\begin{itemize}
    \item \textbf{Vanilla}: standard federated learning with random client selection, used as a baseline \cite{mcmahan2017communication}.
    \item \textbf{RoPE}: a state-of-the-art method that applies PCA for backdoor detection \cite{wang2024rope}.
    \item \textbf{RDFL}: a state-of-the-art approach using clustering and noise clipping for robust aggregation \cite{wang2023adaptive}.
\end{itemize}

For the clustering component of our approach, we use the DBSCAN algorithm. The parameters are tuned through grid search, with the neighborhood radius set to \(\varepsilon = 0.6\) and the minimum samples parameter set to 5. These values balance cluster cohesion and noise filtering, ensuring stable clustering results across the datasets.

\begin{figure*}[!t]
    \centering
    \includegraphics[width=0.90\textwidth]{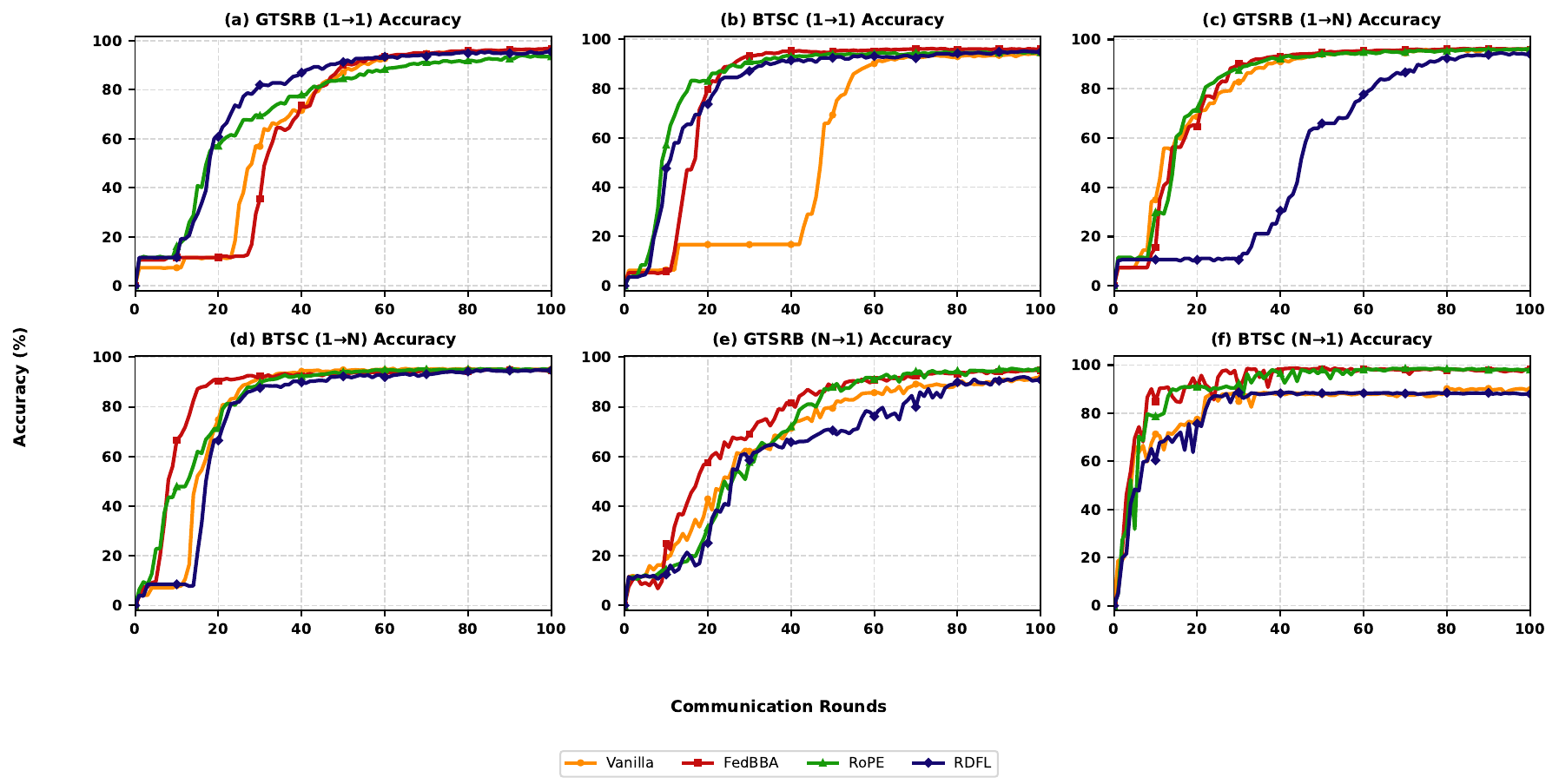}
    \caption{Normal task accuracy comparison of RDFL, Vanilla, FedBBA, and RoPE across datasets and attack settings over 100 communication rounds.}
    \label{fig:accuracy_grid}
\end{figure*}

\begin{figure*}[!t]
    \centering
    \includegraphics[width=0.90\textwidth]{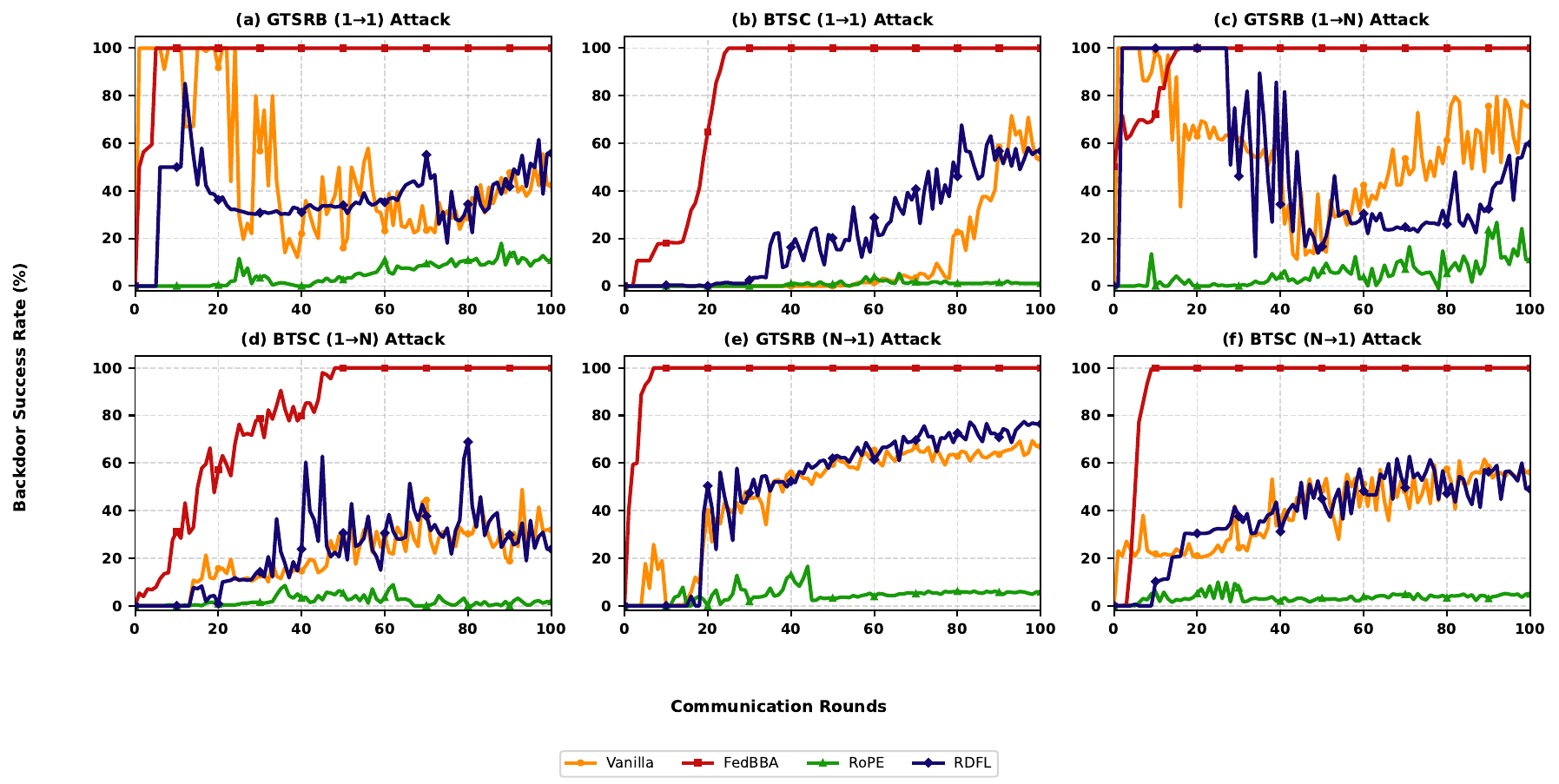}
    \caption{Backdoor success rate comparison of RDFL, Vanilla, FedBBA, and RoPE across datasets and attack scenarios over 100 communication rounds.}
    \label{fig:backdoor_grid}
\end{figure*}

\subsection{Experimental Results}
\label{er}
In this section, we present the results of our experiments in terms of accuracy and backdoor attack success rate, providing a detailed comparative analysis with baseline and state-of-the-art approaches.

To evaluate the impact of the reputation score's weighting parameters, we performed a sensitivity analysis on the values of $\alpha$ and $\beta$, which represent the contribution of historical behavior and gradient-based anomaly signals, respectively. We constrained $\alpha + \beta = 1$ and varied each to assess the effect on both clean-task accuracy and backdoor success rate. The experiments were conducted on the GTSRB dataset with a training duration of 100 federated learning rounds.

\begin{table}[ht]
  \centering
  \caption{Impact of $\alpha$ and $\beta$ on Model Accuracy and Backdoor Success Rate using GTSRB dataset over 100 rounds}
  \label{tab:alpha_beta_sensitivity}
  \small
  \begin{tabular}{cccc}
    \toprule
    $\alpha$ & $\beta$ & Accuracy (\%) & Backdoor Success Rate (\%) \\
    \midrule
    0.1 & 0.9 & 96.6 & 25.0 \\
    0.2 & 0.8 & 95.4 & 28.0 \\
    0.3 & 0.7 & 96.9 & 21.0 \\
    0.4 & 0.6 & 96.2 & 33.0 \\
    \textbf{0.5} & \textbf{0.5} & \textbf{96.8} & \textbf{16.0} \\
    0.6 & 0.4 & 96.2 & 22.0 \\
    0.7 & 0.3 & 95.6 & 41.0 \\
    0.8 & 0.2 & 95.5 & 21.0 \\
    0.9 & 0.1 & 96.1 & 45.0 \\
    \bottomrule
  \end{tabular}
\end{table}

As illustrated in Table~\ref{tab:alpha_beta_sensitivity}, the balanced configuration $\alpha = \beta = 0.5$ provides a better trade-off between clean accuracy and backdoor resilience. It achieves one of the highest accuracy values 96.8\%, while minimizing the backdoor success rate to just 16\%. In contrast, skewed configurations $\alpha = 0.9$, $\beta = 0.1$ result in a significantly higher backdoor success rate up to 45\%, indicating reduced robustness. This empirical evidence justifies our choice of equal weights in the reputation calculation. However, we recognize that optimal values may vary across applications, and tuning may be warranted in future work depending on domain-specific factors and threat severity.

\subsubsection{One-to-One Attacks}

Figure~\ref{fig:accuracy_grid}(a) illustrates the performance of our \textit{FedBBA} approach in mitigating backdoor attacks while maintaining high normal task accuracy using the GTSRB dataset. Initially, all methods exhibit nearly similar accuracy levels; however, as training progresses, \textit{FedBBA}'s accuracy significantly surpasses that of the other approaches. By the 20th round, \textit{FedBBA} achieves an accuracy exceeding 60\%, which is markedly higher than RDFL and RoPE, whose accuracy remains below 15\%. This trend continues with \textit{FedBBA} reaching an accuracy of approximately 96.8\% by the 100th round, compared to 93.4\% for RDFL, 95.5\% for Vanilla, and 91.5\% for RoPE. While the improvement in clean-task accuracy over baseline methods may appear modest $\sim$3.4--11\%, it is statistically meaningful and, more importantly, shows that the mitigation happened without hindering the performance on normal tasks. This is especially important in the context of backdoor attacks, which are stealthy and preserve normal task accuracy to evade detection. Many existing defenses apply techniques such as excluding clients or pruning updates, leading to significant degradation in normal task performance. In contrast, FedBBA achieves a strong balance by preserving high accuracy on the normal task while substantially reducing the backdoor success rate to the lowest $\sim$1--10\%. These results underscore \textit{FedBBA's} robust capability in maintaining model integrity and performance, effectively addressing backdoor attacks more efficiently than the other methods evaluated.

Figure~\ref{fig:backdoor_grid}(a) presents the backdoor attack success rate results, highlighting the effectiveness of the \textit{FedBBA} technique using the GTSRB dataset. The \textit{Vanilla} baseline confirms the efficacy of the attack, rapidly reaching and maintaining a 100\% success rate, thereby establishing the severity of the threat in an unprotected setting. In sharp contrast, \textit{FedBBA} effectively suppresses the backdoor, maintaining a complete 0\% attack success rate for the first 18 rounds. While other defense methods like RDFL and RoPE exhibit significant instability, with RoPE peaking at 85.2\% and RDFL fluctuating considerably, \textit{FedBBA} consistently keeps the attack impact minimal. By the 100th round, \textit{FedBBA} restricts the attack success rate to just 10.8\%, significantly outperforming RDFL (42.4\%) and RoPE (55.8\%).

Figure~\ref{fig:accuracy_grid}(b) shows the performance of \textit{FedBBA} in maintaining high normal task accuracy compared to RDFL, Vanilla, and RoPE using the BTSC dataset. Initially, all methods exhibit similar accuracy levels, with \textit{FedBBA} starting at 3.4\% while RDFL, Vanilla, and RoPE start at 6.2\%, 5.3\%, and 3.5\%, respectively. As training progresses, \textit{FedBBA}'s accuracy significantly surpasses that of the other approaches in the early stages. By the 10th round, \textit{FedBBA} achieves an accuracy of 57.1\%, markedly higher than RDFL (6.4\%), Vanilla (5.9\%), and RoPE (47.6\%). By the 20th round, \textit{FedBBA} achieves an accuracy of 83.0\%, while RDFL, Vanilla, and RoPE reach 16.7\%, 79.6\%, and 73.6\%, respectively. This trend stabilizes with \textit{FedBBA} reaching an accuracy of approximately 94.6\% by the 100th round. In comparison, RDFL, Vanilla, and RoPE exhibit accuracies of 94.2\%, 95.9\%, and 94.8\%, respectively.

Figure~\ref{fig:backdoor_grid}(b) highlights the effectiveness of the \textit{FedBBA} approach in mitigating backdoor attacks compared to RDFL, Vanilla, and RoPE using the BTSC dataset. Initially, no approach exhibits vulnerability. This delay can be attributed to the complexity and breadth of the BTSC dataset, which necessitates a higher number of training rounds for the backdoor mechanism to establish persistence. This behavior is clearly observed in the Vanilla approach, where, even in the absence of any defense mechanism, the attack required more than 20 rounds to achieve a 100\% success rate.

As training progresses beyond this initial phase, \textit{FedBBA} demonstrates remarkable resilience compared to the baselines. For example, at round 38, \textit{FedBBA} and RDFL still show an attack success rate of 0\%, whereas Vanilla has already stabilized at 100\% and RoPE has risen to approximately 7.8\%. By round 100, \textit{FedBBA} successfully suppresses the attack success rate to just 1.1\%, significantly lower than RDFL (53.5\%), RoPE (56.7\%), and Vanilla (100\%). This consistent suppression underscores \textit{FedBBA}'s superior robustness in mitigating backdoor attacks throughout the training process.

\subsubsection{One-to-N Attacks}

Figure~\ref{fig:accuracy_grid}(c) presents the main task accuracy of the global model throughout the training process. Despite the presence of the attack, \textit{FedBBA} demonstrates a high level of utility preservation, achieving a final accuracy of 95.38\% by round 100. This performance is comparable to, and slightly exceeds, the Vanilla baseline, which reaches 94.69\%. In contrast, the other defense mechanisms show a notable degradation in model utility, with RDFL and RoPE achieving lower final accuracies of 91.78\% and 90.81\%, respectively. This indicates that \textit{FedBBA} successfully defends against the attack without compromising the model's ability to learn the primary task.

Figure~\ref{fig:backdoor_grid}(c) highlights the effectiveness of the approaches in mitigating the backdoor attack. The Vanilla model is compromised almost immediately, with the attack success rate jumping to 40\% at round 1 and reaching total saturation of 100\% as early as round 7, where it remains for the duration of training. RoPE and RDFL also fail to prevent the attack, ending with high attack success rates of 76.4\% and 67\%, respectively. Conversely, \textit{FedBBA} exhibits superior robustness, maintaining the attack success rate near zero for most of the training process. By round 100, \textit{FedBBA} suppresses the attack success rate to just 5.84\%, significantly outperforming all other methods and effectively neutralizing the backdoor threat.

Figure~\ref{fig:accuracy_grid}(d) presents the main task accuracy of the global model throughout the training process. \textit{FedBBA} demonstrates exceptional utility preservation, achieving a final accuracy of 98.06\% by round 100. This performance is effectively on par with the Vanilla baseline, which reaches 98.30\%, indicating that the defense mechanism does not hinder the model's ability to learn the complex features of the BTSC dataset. In contrast, the other defense mechanisms suffer from significant utility loss, with RDFL and RoPE dropping to final accuracies of 89.79\% and 87.98\%, respectively.

Figure~\ref{fig:backdoor_grid}(d) highlights the effectiveness of the approaches in mitigating the backdoor attack. The Vanilla model exhibits extreme vulnerability, with the attack success rate surging rapidly to 99.6\% by round 9 and reaching total saturation (100\%) by round 10, where it remains fixed. RoPE and RDFL also fail to provide adequate protection, ending the training with high attack success rates of 49.02\% and 56.10\%, respectively. Conversely, \textit{FedBBA} shows superior resilience. Despite the aggressive nature of the attack, \textit{FedBBA} consistently suppresses the backdoor, maintaining a low attack success rate throughout the session and concluding at just 4.88\% by round 100. This sharp contrast with the baselines underscores \textit{FedBBA}'s capability to neutralize One-to-N backdoor injections on the BTSC dataset.

\subsubsection{N-to-One Attacks}

Figure~\ref{fig:accuracy_grid}(e) evaluates the performance of various defense mechanisms under a multiple-to-one backdoor attack using the GTSRB dataset. Initially, at round 1, all defenses exhibit similar performance with accuracy rates around 7.41\% for RDFL and Vanilla, 11.57\% for \textit{FedBBA}, and 10.19\% for RoPE. As training progresses, \textit{FedBBA} consistently outperforms or matches the other methods. By round 20, \textit{FedBBA}’s accuracy escalates to 71.74\%, higher than RDFL (68.66\%), Vanilla (64.55\%), and significantly outperforming RoPE (10.56\%). By round 100, \textit{FedBBA} reaches a peak accuracy of 96.14\%, effectively matching or slightly exceeding RDFL (96.06\%) and Vanilla (96.02\%), while remaining superior to RoPE (93.97\%).

Figure~\ref{fig:backdoor_grid}(e) underscores the superior efficacy of the \textit{FedBBA} approach in comparison to RDFL and RoPE in reducing the backdoor success rate under the multi-trigger setting using the GTSRB dataset. Initially, Vanilla exhibits immediate vulnerability, starting at 50.2\% in round 0, while RDFL spikes to 100\% by round 1. In contrast, \textit{FedBBA} starts with an attack success rate of 0\% and, despite the aggressive attack environment, keeps the rate significantly lower than the other methods throughout the session. For example, at round 40, \textit{FedBBA} shows an attack success rate of just 4.2\%, whereas Vanilla has already saturated at 100\%, RDFL is at 54.6\%, and RoPE is at 34.4\%. By round 100, \textit{FedBBA} suppresses the attack success rate to 11.0\%, which is significantly lower than RDFL (75.6\%), RoPE (59.8\%), and Vanilla (100\%).

Figure~\ref{fig:accuracy_grid}(f) illustrates the performance of various defense mechanisms in the normal task accuracy under a multiple-to-one backdoor attack scenario using the BTSC dataset. Initially, all methods exhibit comparable accuracy levels below 7\%. By round 10, Vanilla takes an early lead with 66.5\%, while \textit{FedBBA} achieves 47.9\%, RDFL 8.1\%, and RoPE 8.5\%. However, \textit{FedBBA} demonstrates steady and robust learning as training progresses. By round 100, \textit{FedBBA} reaches a peak accuracy of 95.08\%, effectively matching RDFL (95.16\%) and outperforming both Vanilla (94.44\%) and RoPE (94.72\%). This confirms that \textit{FedBBA} maintains high utility even under complex attack scenarios.

Figure~\ref{fig:backdoor_grid}(f) highlights the effectiveness of \textit{FedBBA} in mitigating backdoor attacks compared to RDFL and RoPE in a multiple-to-one attack scenario using the BTSC dataset. \textit{FedBBA} demonstrates a significant improvement, consistently achieving lower success rates. Specifically, \textit{FedBBA} starts with a success rate of 0\% and, despite the attack's persistence, keeps the rate significantly low throughout the rounds. For instance, by round 50, \textit{FedBBA} suppresses the attack success rate to 5.3\%, whereas Vanilla has already reached total saturation (100\%), and RDFL and RoPE have risen to 25.6\% and 30.6\%, respectively. By round 100, \textit{FedBBA} further limits the attack success rate to 1.77\%, significantly lower than RDFL (31.9\%), RoPE (23.8\%), and Vanilla (100\%).

In summary, the results illustrate that \textit{FedBBA} consistently outperforms RDFL and RoPE in maintaining high accuracy and mitigating backdoor attacks. Across different attack scenarios and datasets, \textit{FedBBA} proves to be a robust defense mechanism, specifically in the N-to-One and One-to-N coordinated backdoor attack, which reflects the framework's ability to detect the pattern of the malicious data in the suspicious models, effectively enhancing model integrity and performance while addressing the challenges posed by backdoor attacks.

\section{Conclusion}
\label{conc}

In this work, we proposed a novel and comprehensive backdoor attack detection and mitigation scheme for federated learning called \textit{FedBBA}. Our framework uniquely integrates Projection Pursuit Analysis (PPA), a robust reputation system, and game-theoretic incentives to neutralize malicious clients. Extensive simulations on the GTSRB and BTSC datasets demonstrate that \textit{FedBBA} significantly outperforms existing state-of-the-art methods, specifically RDFL and RoPE, across diverse attack scenarios, including One-to-N and N-to-One attacks. Empirical results confirm that \textit{FedBBA} effectively suppresses backdoor attack success rates to between 1.1\% and 11\%, whereas baseline defenses often failed to contain the threat, yielding success rates as high as 76\%. Crucially, this robustness is achieved without hindering the global model's performance; \textit{FedBBA} consistently maintained normal task accuracy in the range of 95\% to 98\%. While \textit{FedBBA} demonstrates strong effectiveness, its evaluation is restricted to image-based classification tasks. Future work will explore generalizing the framework to diverse modalities, such as natural language processing and IoT sensor data. Additionally, we plan to incorporate adaptive parameter learning to dynamically tune the reputation weights ($\alpha$ and $\beta$), further enhancing the framework's scalability and responsiveness to evolving threat landscapes.

\bibliographystyle{IEEEtran}
\bibliography{fedbba}

\vskip 0pt plus -1fil
\vspace{-33pt}

\begin{IEEEbiographynophoto}{Osama Wehbi}
is a Ph.D. candidate in the Department of Computer and Software Engineering at Polytechnique Montreal, Canada. His primary research areas include cybersecurity, federated learning, and game theory.
\end{IEEEbiographynophoto}

\vskip 0pt plus -1fil
\vspace{-33pt}

\begin{IEEEbiographynophoto}{Sarhad Arisdakessian}
is a Ph.D. candidate in the Department of Computer and Software Engineering at Polytechnique Montreal, Canada. His research interests lie in federated learning and game theory.
\end{IEEEbiographynophoto}

\vskip 0pt plus -1fil
\vspace{-33pt}

\begin{IEEEbiographynophoto}{Omar Abdel Wahab}
holds an assistant professor position with the Department of Computer and Software Engineering, Polytechnique Montreal, Canada. His research activities focus on cybersecurity, Internet of Things, and artificial intelligence.
\end{IEEEbiographynophoto}

\vskip 0pt plus -1fil
\vspace{-33pt}

\begin{IEEEbiographynophoto}{Anderson R. Avila}
is a Ph.D. candidate at the Institut National de la Recherche Scientifique (INRS), Canada. His research interests include speaker and emotion recognition, pattern recognition, and multimodal signal processing for biometric applications.
\end{IEEEbiographynophoto}

\vskip 0pt plus -1fil
\vspace{-33pt}

\begin{IEEEbiographynophoto}{Azzam Mourad}
is a Professor of Computer Science with Khalifa University, UAE. His research interests include cloud computing, artificial intelligence, and cybersecurity.
\end{IEEEbiographynophoto}

\vskip 0pt plus -1fil
\vspace{-33pt}

\begin{IEEEbiographynophoto}{Hadi Otrok}
is a Professor and Chair of the Department of Electrical Engineering and Computer Science at Khalifa University, UAE. His research interests include computer and network security, crowd sensing and sourcing, ad hoc networks, and cloud security.
\end{IEEEbiographynophoto}

\end{document}